\ificcvfinal\pagestyle{empty}\fi
\begin{document}
	
	
	\title{Orthogonal Jacobian Regularization for Unsupervised Disentanglement in Image Generation}
	
	\author{\small{Yuxiang Wei$^{1}$\footnotemark[1], \ \ Yupeng Shi$^{1}$, Xiao Liu$^{2}$, Zhilong Ji$^{2}$, Yuan Gao$^{2}$, Zhongqin Wu$^{2}$, Wangmeng Zuo$^{1, 3}$ $^{(}$\Envelope$^)$ } \\
		$^1$\small{Harbin Institute of Technology}, \ \ $^2$\small{Tomorrow Advancing Life}, \ \ $^3$\small{Pazhou Lab, Guangzhou}\\
		{\tt\small{ \{yuxiang.wei.cs, csypshi\}@gmail.com} } {\tt {\small\{liuxiao15, jizhilong, gaoyuan23, wuzhongqin\}@tal.com} } \\ {\tt{\small{wmzuo@hit.edu.cn}}} \\    \\
	}

	\maketitle
	
	\renewcommand{\thefootnote}{\fnsymbol{footnote}}
	\footnotetext[1]{This work was done when Yuxiang Wei was a research intern at TAL}
	
	\begin{abstract}
		\vspace{-1em}
		Unsupervised disentanglement learning is a crucial issue for understanding and exploiting deep generative models. Recently, SeFa tries to find latent disentangled directions by performing SVD on the first projection of a pre-trained GAN. However, it is only applied to the first layer and works in a post-processing way. Hessian Penalty minimizes the off-diagonal entries of the output's Hessian matrix to facilitate disentanglement, and can be applied to multi-layers.
		However, it constrains each entry of output independently, making it not sufficient in disentangling the latent directions (e.g., shape, size, rotation, etc.) of spatially correlated variations. 
		In this paper, we propose a simple \textbf{Or}th\textbf{o}gonal \textbf{Ja}cobian \textbf{R}egularization (\textbf{OroJaR}) to encourage deep generative model to learn disentangled representations. It simply encourages the variation of output caused by perturbations on different latent dimensions to be orthogonal, and the Jacobian with respect to the input is calculated to represent this variation. 
		We show that our OroJaR also encourages the output's Hessian matrix to be diagonal in an indirect manner. In contrast to the Hessian Penalty, our OroJaR constrains the output in a holistic way, making it very effective in disentangling latent dimensions corresponding to spatially correlated variations. 
		Quantitative and qualitative experimental results show that our method is effective in disentangled and controllable image generation, and performs favorably against the state-of-the-art methods. Our code is available at \url{https://github.com/csyxwei/OroJaR}.

	\end{abstract}
	
	\vspace{-1em}
	\section{Introduction}
	
	\begin{figure}
		\centering
		\includegraphics[width=1\linewidth]{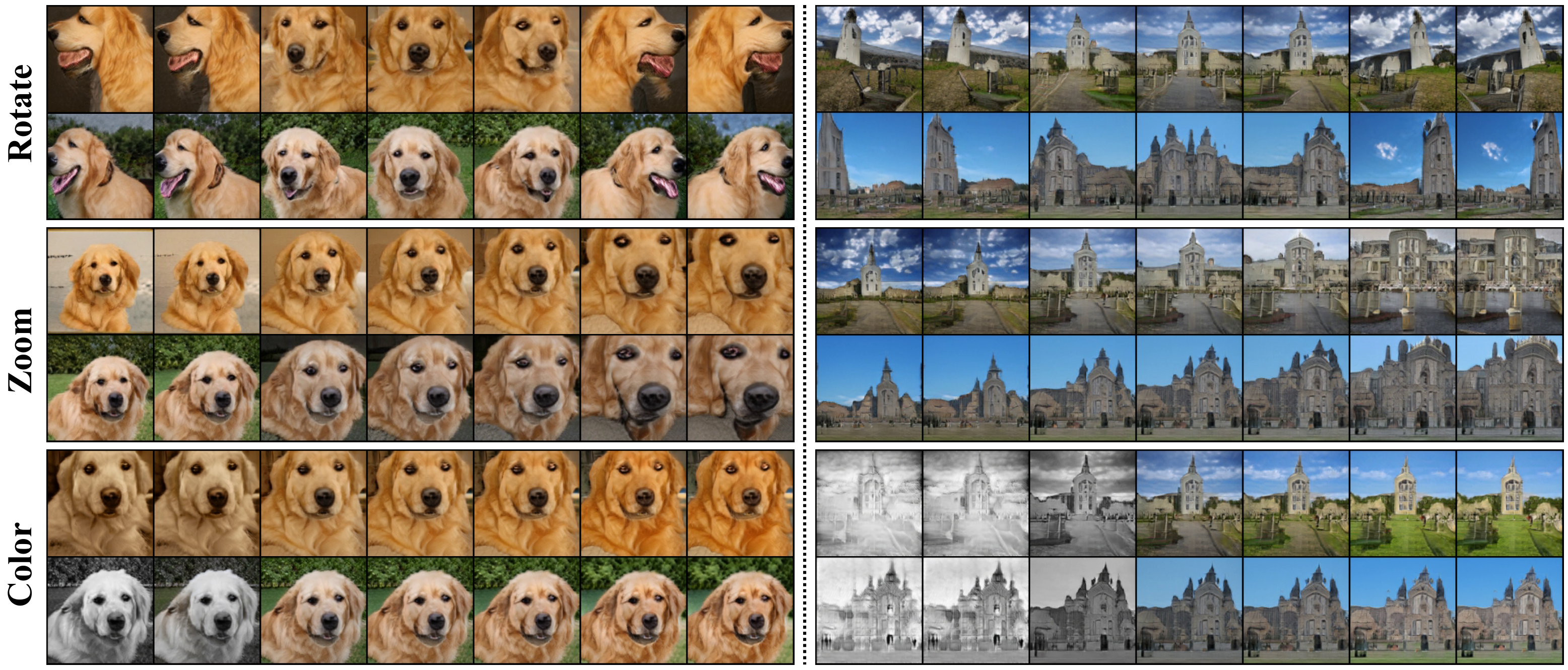}
		\caption{Examples of orthonormal directions learned by our method in BigGAN conditioned to synthesize ImageNet Golden Retrievers or Churches. Moving across a row, we move a latent code along a single linear direction in $\mathbf{z}$-space.}
		\label{fig:biggan}
		\vspace{-2em}
	\end{figure}
	
	In a disentangled representation, each dimension corresponds to the change in one factor of variation (FOV), while being independent to changes in other factors~\cite{bengio2013deep}. Learning disentangled representations from a given dataset is a major challenge in artificial intelligence, and can be beneficial to many computer vision tasks, such as domain adaptation \cite{yang2019unsupervised, peng2019domain}, controllable image generation \cite{shen2020closed, voynov2020unsupervised, zhu2020learning, peebles2020hessian}, and image manipulation \cite{shen2020interfacegan}.
	
	In the recent few years, unsupervised disentanglement learning has attracted intensive attention, owing to its importance in understanding generative models \cite{peebles2020hessian, shen2020closed} and extensive applications in various vision tasks \cite{yang2019unsupervised, shen2020interfacegan}. 
	Based on two representative generative models, \ie Variational Autoencoder (VAE) \cite{kingma2013auto} and Generative Adversarial Networks (GAN)\cite{goodfellow2014generative}, many disentanglement methods \cite{higgins2016beta, kim2018disentangling, jeong2019learning, chen2018isolating, dupont2018learning, chen2016infogan, zhu2020learning, peebles2020hessian, shen2020closed} have been proposed. 
	VAE-based methods, such as $ \beta $-VAE \cite{higgins2016beta}, FactorVAE \cite{kim2018disentangling}, $ \beta $-TCVAE \cite{chen2018isolating}, \etc, attain disentanglement mainly by enforcing the independence in the latent variables. 
	However, their disentanglement performance and the visual quality of generated images remain quite limited. With the progress in Generative Adversarial Networks (GAN) \cite{goodfellow2014generative}, many GAN-based disentanglement methods have been proposed \cite{chen2016infogan, zhu2020learning, peebles2020hessian, shen2020closed}. SeFa \cite{shen2020closed} learns the disentangled latent directions by directly decomposing the weight of the first fully-connected layer of a pre-trained GAN. However, it is only applied to the first layer of the generator model and works in a post-processing way, which limits the performance of disentanglement. Hessian Penalty \cite{peebles2020hessian} encourages to learn a disentangled representation by minimizing the off-diagonal entries of the output's Hessian matrix with respect to its input. However, it uses a max function to extend the regularization from scalar-valued functions to vector-valued functions, yet treats each entry of the output independently, making it not sufficient in disentangling the latent directions (\eg, shape, size, rotation, \etc) corresponding to spatially correlated variations.
	
	Inspired by Hessian Penalty~\cite{peebles2020hessian} and SeFa~\cite{shen2020closed}, we propose a simple regularization term to encourage the generative model to learn disentangled representations. Our method is based on a straightforward intuition: when perturbing a single dimension of the network input, we would like the change in the output to be independent (and also uncorrelated) with those caused by the other input dimensions. To this end, the output's Jacobian matrix is calculated to represent the change caused by the latent input. To encourage the changes caused by different latent dimensions to be uncorrelated, we simply constrain the Jacobian vector of each dimension to be orthogonal. In contrast to Hessian Penalty, we constrain the change in a holistic way, thereby making it very competitive in disentangling latent dimensions corresponding to spatially correlated variations.
	We call this regularization term as \textbf{Or}th\textbf{o}gonal \textbf{Ja}cobian \textbf{R}egularization (\textbf{OroJaR}). In Sec.~\ref{sec:relation}, we show that our OroJaR also constrains the Hessian matrix to be diagonal in an indirect way. On the other hand, our OroJaR can be treated as an end-to-end generalization of SeFa on multiple layers, which is also beneficial to the disentanglement performance. In practice, due to the fact that computing the Jacobian matrices during training is time consuming, we approximate it via a first-order finite difference approximation to accelerate training.
	
	Experiments show that our OroJaR performs favorably against the state-of-the-art methods~\cite{shen2020closed,peebles2020hessian,zhu2020learning} for unsupervised disentanglement learning on three datasets (\ie,  Edges+Shoes \cite{yu2014fine}, CLEVR \cite{peebles2020hessian}, and Dsrpites \cite{dsprites17}). 
	Moreover, our OroJaR can be used to explore directions of meaningful variation in the latent space of pre-trained generators. From Fig.~\ref{fig:biggan}, our method is effective in finding the disentangled latent directions (\eg, rotation, zoom and color, \etc) in BigGAN pre-trained on ImageNet.
	
	The contributions of this work can be summarized as:
	\vspace{-0.5em}
	\begin{itemize}
		\setlength{\itemsep}{0pt}
		\setlength{\parsep}{0pt}
		\setlength{\parskip}{0pt}
		\item We present a simple Orthogonal Jacobian Regularization (OroJaR)  to encourage the deep generative model to learn better disentangled representations.
		\item OroJaR can be applied to multiple layers of the generator, constrains the output in a holistic way, and indirectly encourages the Hessian matrix to be diagonal.
		\item Extensive experiments show the effectiveness of
		our proposed method in learning and exploring disentangled representations, especially those corresponding to spatially correlated variations.
	\end{itemize}
	
	\section{Related Work}
	
	\subsection{Disentanglement Learning in VAE}
	
	Variational Autoencoder (VAE) \cite{kingma2013auto} has been widely adopted in state-of-the-art disentanglement methods \cite{locatello2019challenging, higgins2016beta, kim2018disentangling, chen2018isolating, kumar2017variational, ding2020guided, karaletsos2015bayesian, jha2018disentangling}. $ \beta $-VAE \cite{higgins2016beta} introduced an adjustable hyperparameter $ \beta  > 1$ on the KL divergence between the variational posterior and the
	prior to VAE for benefiting disentangled representations, but meanwhile, it sacrificed the reconstruction result.
	Based on $ \beta $-VAE, \cite{kim2018disentangling} and \cite{chen2018isolating} introduced the total correlation (TC) term in order to improve disentanglement performance.
	DIP-VAE \cite{kumar2017variational} used moment matching to penalize the divergence between aggregated posterior and the prior to encourage the disentanglement.
	Guided-VAE \cite{ding2020guided} used an additional discriminator to guide the unsupervised disentanglement learning and learned the latent geometric transformation and principal components.
	Additionally, JointVAE \cite{dupont2018learning} and CascadeVAE \cite{jeong2019learning} tried to simultaneously learn disentangled continuous and discrete representations in an unsupervised manner.
	To sum up, most existing VAE-based methods disentangle the variations mainly by factorizing aggregated posterior, but usually suffer from low-quality image generation ability.
	
	\subsection{Disentanglement Learning in GAN}
	
	Two kinds of methods, \ie, two-stage and one-stage ones, have been mainly investigated for finding disentangled representations in GAN \cite{goodfellow2014generative}.
	The two-stage methods identify disentangled and interpretable directions in the latent space of a pre-trained GAN.
	While the one-stage methods encourage disentanglement during GAN training by introducing appropriate extra regularization.
	
	\noindent\textbf{Interpretable directions in the latent space.}
	Several unsupervised methods have been suggested for discovering interpretable directions in the latent space of a pre-trained GAN~\cite{bau2018gan,harkonen2020ganspace,shen2020closed, voynov2020unsupervised, shen2020interfacegan}.
	Voynov \etal \cite{voynov2020unsupervised} searched a set of directions that can be easily distinguished from each other by jointly learning a candidate matrix and a classifier such that the semantic directions in the matrix can be properly recognized by the classifier. 
	H{\"a}rk{\"o}nen \etal \cite{harkonen2020ganspace} performed PCA on the sampled data to find the important and meaningful directions in the style space of StyleGAN.
	Shen \etal \cite{shen2020closed} searched the interpretable directions by performing SVD on the weight of the first layer of a pre-trained GAN.
	Wang \etal \cite{wang2021geometry} unified these approaches by treating them as special cases of computing the spectrum of the Hessian for the LPIPS model \cite{zhang2018unreasonable} with respect to the input.
	%
	%
	Nonetheless, two-stage methods only work in a post-processing manner for pre-trained GANs, and generally fail to discover the disentangled components that are nonlinear in the latent space.
	
	\noindent\textbf{Disentanglement learning with regularization.}
	Instead of post-processing, studies have also been given to achieve disentanglement by incorporating extra regularization~\cite{chen2016infogan, peebles2020hessian, zhu2020learning, tran2017disentangled, donahue2017semantically, nguyen2019hologan, ramesh2018spectral} in GAN training.
	%
	%
	InfoGAN \cite{chen2016infogan} learned the disentangled representations by maximizing the mutual information between the input latent variables and the output of the generator.
	Zhu \etal \cite{zhu2020learning} presented a variation predictability loss that encourages disentanglement by maximizing the mutual information between latent variations and corresponding image pairs.
	Peebles \etal \cite{peebles2020hessian} proposed the Hessian Penalty to make the generator have diagonal Hessian with respect to the input. 
	However, the max operator is used to extend Hessian Penalty for handling vector-valued output.
	As a result, it constrains each entry of output independently and is not sufficient in disentangling the latent directions corresponding to spatially correlated variations.
	Our OroJaR is motivated by Hessian Penalty \cite{peebles2020hessian} and SeFa \cite{shen2020closed}.
	It can be treated as an end-to-end generalization of SeFa to {multiple layers}, and constrain the change caused by latent dimension in a holistic way.  
	Experiments also show that OroJaR is more effective in disentangling latent dimensions corresponding to spatially correlated variations.
	
	\subsection{Orthogonal Regularization} \label{sec:ortho}
	Many recent studies have been given to incorporate the orthogonality for improving deep network training~\cite{wang2015deep, jia2017improving, odena2018generator, brock2016neural, rodriguez2016regularizing, wang2020orthogonal}.
	Wang \etal~\cite{wang2015deep} imposed orthogonal regularization on the weighting parameters with the form $ \Vert \mathbf{W}^T\mathbf{W} - \mathbf{I} \Vert_2 $, where $ \mathbf{W} $ is the weight matrix and $\mathbf{I}$ is an identity matrix.
	Jia \etal~\cite{jia2017improving} encouraged the orthogonality by bounding the singular values of the weight matrix in a narrow range around 1.
	For improving image generation quality, BigGAN~\cite{brock2018large} introduced a ``truncation trick'' by removing the diagonal terms from the regularization.
	Bansal \etal \cite{bansal2018can} introduced another orthogonal regularization by considering both $ \Vert \mathbf{W}^T\mathbf{W} - \mathbf{I} \Vert_2 $ and $ \Vert \mathbf{W}\mathbf{W}^T - \mathbf{I} \Vert_2 $.

	Besides the weight matrix, orthogonal regularization can also be used to constrain the latent space and Jacobian matrix.
	PrOSe~\cite{shukla2019product} parameterized the latent space representation as a product of orthogonal spheres to learn disentangled representations. 
	Odena \etal \cite{odena2018generator} introduced a regularization term to encourage the singular values of Jacobian matrix $\mathbf{J}$ of the generator to lie within a range. 
	It can also constrain $\mathbf{J}$ to be orthonormal to a scale when the range is sufficiently narrow.
	StyleGAN2 \cite{karras2020analyzing} presented a path length regularization which implicitly encourages the Jacobian matrix of the generator to be orthonormal up to a global scale.
	While the regularizers in~\cite{odena2018generator,karras2020analyzing} are adopted to improve the quality of the learned generator, our OroJaR is introduced to encourage the generator to learn disentangled representations. 
	Moreover, \cite{odena2018generator,karras2020analyzing} encourage the Jacobian vectors to be \emph{orthonormal} to a global scale, while our OroJaR only constrains them to be \emph{orthogonal}.
	
	\section{Proposed Method}

	In this section, we first describe the proposed Orthogonal Jacobian Regularization (OroJaR) for learning disentangled representations.
	Then, a first-order finite difference approximation is introduced to accelerate training.
	Finally, we discuss its connections with the related disentanglement methods, \ie, SeFa~\cite{shen2020closed} and Hessian Penalty~\cite{peebles2020hessian}.
	
	\subsection{Orthogonal Jacobian Regularization}
	Suppose $ G $: $ \mathbf{x} = G(\mathbf{z}) $ is a deep generative model.
	Here, $\mathbf{z} = [z_1, ..., z_i, ..., z_m]^T \in \mathbb{R}^{m} $ denotes the input vector to $G$, and $z_i$ denotes the $i$-th latent dimension.
	$ \mathbf{x} \in \mathbb{R}^n$ denotes the output of $ G $, and $\mathbf{x}_d = G_d(\mathbf{z})$ is further introduced to denote the the $d$-th layer's output of $G$.
	In terms of disentangled representation, each latent dimension is assumed to control the change in one factor of variation.
	That is, the changes caused by two different latent dimensions $ z_i $ and $ z_j $ should be independent (and also uncorrelated).

	In our method, we use the Jacobian vector, \ie, $ \frac{\partial G_d}{\partial z_i} $, to represent the change caused by the perturbation on the latent dimension $ z_i $.
	Then, for encouraging disentangled representation, we constrain their Jacobian vectors of different latent dimensions to be orthogonal,
	\vspace{-0.5em}
	\begin{equation}
		\left[ \frac{\partial G_d}{\partial z_i}\right]^T \frac{\partial G_d}{\partial z_j} = 0 .
		\label{eq:l1}
		\vspace{-0.5em}
	\end{equation}
	It is worth noting that, the orthogonality of two vectors indicates that they are uncorrelated, which also encourages the changes caused by different latent dimensions to be independent.
	
	Taking all latent dimensions into account, we present the Orthogonal Jacobian Regularization (OroJaR) for helping deep generative model to learn disentangled representations,
	\vspace{-1em}
	\begin{equation}
		\mathcal{L}_J (G) \!=\! \sum_{d=1}^{D} \Vert \mathbf{J}_d^T \mathbf{J}_d \circ ( \mathbf{1} - \mathbf{I}) \Vert \!=\! \sum_{d=1}^{D} \sum_{i=1}^{m} \sum_{j \neq i} \left\vert \left[\frac{\partial G_d}{\partial z_i} \right]^T  \frac{\partial G_d}{\partial z_j} \right\vert^2,
		\label{eq:l2}
	\end{equation}
	where $ \mathbf{J}_d = [\mathbf{j}_{d,1}, ..., \mathbf{j}_{d,i,}, \mathbf{j}_{d,m}]$ denotes the Jacobian matrix of $ G_d $ with respect to $ \mathbf{z} $, and $\circ$ denotes the Hadamard product.
	$\mathbf{I}$ denotes an identity matrix, and $\mathbf{1}$ is a matrix of all ones.
	In particular, we use $\mathbf{j}_{d,i} = \frac{\partial G_d}{\partial z_i}$ to represent a Jacobian vector.

	Our OroJaR constrains the change of output caused by latent dimension in a holistic way.
	To illustrate this point, we let $\mathbf{j}_{d}^{ij} = \mathbf{j}_{d,i} \circ \mathbf{j}_{d,j}$.
	Then, $\mathbf{j}_{d,i}^T \mathbf{j}_{d,j}$ can be equivalently obtained as the sum of all the elements of $\mathbf{j}_{d}^{ij}$.
	Obviously, OroJaR only constrains the summation of $\mathbf{j}_d^{ij}$ is small, and each element of $\mathbf{j}_{d}^{ij}$ can be positive/negative as well as large/small.
	Thus, our OroJaR does not impose any individual constraint on the elements of $\mathbf{j}_{d}^{ij}$.
	We note that the changes caused by many latent semantic factors (\eg, shape, size, rotation, \etc) usually are spatially correlated, and are better to be constrained in a holistic manner.
	In comparison, Hessian Penalty~\cite{peebles2020hessian} uses a max function for aggregating the Hessian matrix of vector-valued output.
	It actually requires the off-diagonal entries of the Hessian matrix to be small for each element of the output, thereby making it not sufficient in disentangling the factors of complex and spatially correlated variations.

	\subsection{Approximation for Accelerated Training}
	During training, it is time consuming to compute the Jacobian matrices in Eqn.~(\ref{eq:l2}) when $ m $ is large. Following \cite{peebles2020hessian, hutchinson1989stochastic}, we use the Hutchinson's estimator to rewrite Eqn.~(\ref{eq:l2}) as:
	\vspace{-0.5em}
	\begin{equation}
		\small
		\mathcal{L}_J (G) \!=\! \sum_{d=1}^{D} \text{Var}_\mathbf{v} \left[ \mathbf{v}^T (\mathbf{j}_{d}^T \mathbf{j}_{d}) \mathbf{v} \right] \!=\! \sum_{d=1}^{D} \text{Var}_\mathbf{v} \left[(\mathbf{j}_{d} \mathbf{v})^T \mathbf{j}_{d} \mathbf{v} \right],
		\label{eq:hum}
		\vspace{-0.2em}
	\end{equation}
	where $ \mathbf{v} $ are Rademacher vectors (each entry has equal probability of being -1 or 1), and $\text{Var}_\mathbf{v}$ denotes the variance. $ \mathbf{j}_{d} \mathbf{v} $ is the first directional derivative of $ G $ in the direction $ \mathbf{v} $ times $ \vert \mathbf{v} \vert $. 
	$ \mathbf{j}_{d} \mathbf{v} $ can be efficiently computed by a first-order finite difference approximation \cite{richardson1954introduction}:
	\vspace{-0.5em}
	\begin{equation}
		\mathbf{j}_{d} \mathbf{v} = \frac{1}{\epsilon} [G(\mathbf{z} + \epsilon \mathbf{v}) - G(\mathbf{z})],
		\label{eq:l3}
		\vspace{-0.2em}
	\end{equation}
	where $ \epsilon \textgreater 0 $  is a hyperparameter that controls the granularity of the first directional derivative estimate. In our implementation, we use $\epsilon  = 0.1$.
	
	\subsection{Applications in Deep Generative Models \label{sec: app}}
	
	Our OroJaR can be applied to many generative models, and here we consider the representative Generative Adversarial Networks (GAN) \cite{goodfellow2014generative}. The OroJaR can be applied to GAN in two ways.

	\noindent\textbf{Training from scratch}. For GAN, the discriminator $ D $ and generator $ G $ are respectively trained using $\mathcal{L}_{D}$ and $\mathcal{L}_{G}$,
	\begin{equation}
		\mathcal{L}_{D} = \mathbb{E}_\mathbf{x} [f(D(\mathbf{x}))] + \mathbb{E}_\mathbf{z} [f(1 - D(G(\mathbf{z})))],
		\label{eq:dadv}
	\end{equation}
	\vspace{-1em}
	\begin{equation}
		\mathcal{L}_{G} = \mathbb{E}_\mathbf{z} [f(1 - D(G(\mathbf{z})))],
		\label{eq:gadv}
	\end{equation}
	where $ f $ is a model-specific mapping adopted by GAN.
	In order to apply OroJaR to GAN training, we simply modify the loss for the generator as,
	\begin{equation}
		\mathcal{L}_{G}^{oro} = \mathbb{E}_\mathbf{z} [f(1 - D(G(\mathbf{z})))] + \lambda   \mathbb{E}_\mathbf{z} [ \mathcal{L}_J (G(\mathbf{z})) ],
		\label{eq:dadvwj}
	\end{equation}
	where $ \lambda$ is a trade-off hyper-parameter.
	Incorporating $\mathcal{L}_J (G)$ into GAN training is beneficial to learning disentangled representation, and encourages $G$ to achieve controllable and disentangled image generation.
	
	\noindent
	\textbf{Apply to pre-trained generator}. Analogous to Hessian Penalty \cite{peebles2020hessian}, our OroJaR can be used to identify interpretable directions in latent space of a pretrained generator. Specifically, we introduce a learnable orthonormal matrix $ \mathbf{A} \in \mathbb{R}^{m \times N} $, where $ N $ denotes the number of orthonormal directions we want to learn and $ m $ is the latent dimension; the columns of $ \mathbf{A} $ store the directions we are learning. After appling the OroJaR to pre-trained $ G $, $ \mathbf{A} $ is optimized by:
	\begin{equation}
		\mathbf{A}^* = \arg \min_{\mathbf{A}} \mathbb{E}_{\mathbf{z}, \omega_i} \mathcal{L}_J (G(\mathbf{z} + \eta \mathbf{A} \omega_i)), 
		\label{eq:biggan}
	\end{equation}
	where $ \omega_i \in \{0, 1\}^N $ is a one-hot vector which indexes the columns of $ \mathbf{A} $ and $ \eta $ is a scalar which controls how far $\mathbf{z}$ should move in the direction. The difference with Eqn.~(\ref{eq:dadvwj}) is the OroJaR is now taken  \wrt $ \omega_i $ instead of $ \mathbf{z} $. In our training, we use $\eta  = 1$. After optimization,  $ \mathbf{A} $ can be used to edit the generated images by $G(\mathbf{z} + \eta \mathbf{A} \omega_i)$.
	
	\subsection{Connections with SeFa and Hessian Penalty}
	\label{sec:relation}
	
	We further discuss connections and differences of OroJaR with two representative disentanglement learning methods, \ie, SeFa~\cite{shen2020closed} and Hessian Penalty~\cite{peebles2020hessian}.
	
	\noindent\textbf{SeFa}. SeFa~\cite{shen2020closed} performs SVD on the weight matrix $\mathbf{W}\in \mathbb{R}^{m_1 \times m}$ of the first layer to discover semantically meaningful directions in the latent space of pre-trained GAN.
	Let $\mathbf{W} = \mathbf{U} \boldsymbol{\Lambda} \mathbf{V}^T$ be the singular value decomposition (SVD) of $\mathbf{W}$.
	In SeFa~\cite{shen2020closed}, the semantically meaningful directions are given as the column vectors of $\mathbf{V}$.
	We introduce $\mathbf{z}^{\prime} = \mathbf{V}^T \mathbf{z}$ and $\mathbf{W}^{\prime} = \mathbf{U} \boldsymbol{\Lambda}$, and define $G_1(\mathbf{z}) = \mathbf{W}\mathbf{z}$ and $G^{\prime}_1(\mathbf{z}^{\prime}) = \mathbf{W}^{\prime}\mathbf{z}^{\prime}$.
	One can easily see that (i) each dimension of $\mathbf{z}^{\prime}$ corresponds to a semantically meaningful direction discovered by SeFa~\cite{shen2020closed}.
	(ii) $G^{\prime}_1(\mathbf{z}^{\prime})$ is equivalent with $G_1(\mathbf{z})$, \ie, $G_1(\mathbf{z}) = G^{\prime}_1(\mathbf{z}^{\prime})$.
	(iii) Hard orthogonal Jacobian constraint can be attained, \ie,
	\begin{equation}
		\left[ \frac{\partial G^{\prime}_1}{\partial z^{\prime}_i}\right]^T \frac{\partial G^{\prime}_1}{\partial z^{\prime}_j} = 0.
	\end{equation}
	Thus, SeFa~\cite{shen2020closed} can be treated as a special case of our OroJaR by finding the globally optimum of $\mathcal{L}_J$ defined only on the first layer $G^{\prime}_1(\mathbf{z}^{\prime})$ and keeping the parameters of all other layers unchanged.
	In contrast to SeFa, our OroJaR can be deployed to multiple layers and be jointly optimized with GAN in an end-to-end manner, thereby being beneficial to learn better disentangled representation.
	
	\begin{figure*}
		\centering
		\includegraphics[width=0.99\linewidth]{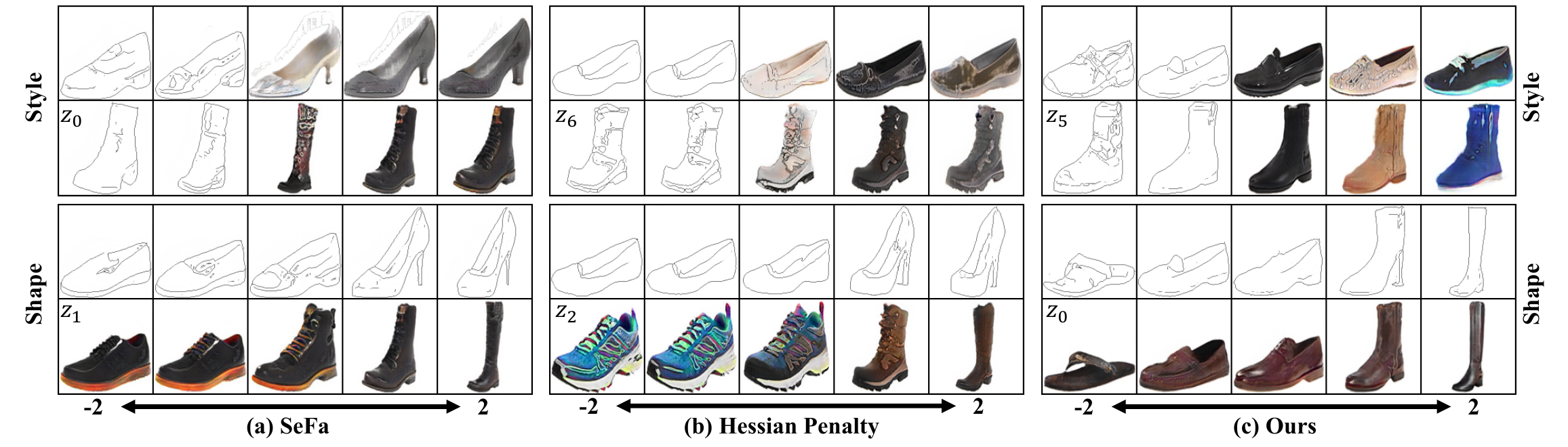}
		\vspace{-1em}
		\caption{
			Comparison of disentanglement quality by our OroJaR, Hessian Penalty \cite{peebles2020hessian} and SeFa \cite{shen2020closed} on Edges+Shoes. 	
			For each method, we randomly sample two 12-dimensional Gaussian vectors. 
			We select two interpretable dimensions to display, \ie, the shape and style of shoes, and every two rows correspond to one interpretable dimension. Moving across a row, we vary the value of dimension $ z_i $ from $-2$ to $+2$ while keeping the other 11 dimensions unchanged.
		}
		\label{fig:visual_shoes}
		\vspace{-1.3em}
	\end{figure*}
	
	\begin{figure*}
		\centering
		\includegraphics[width=0.99\linewidth]{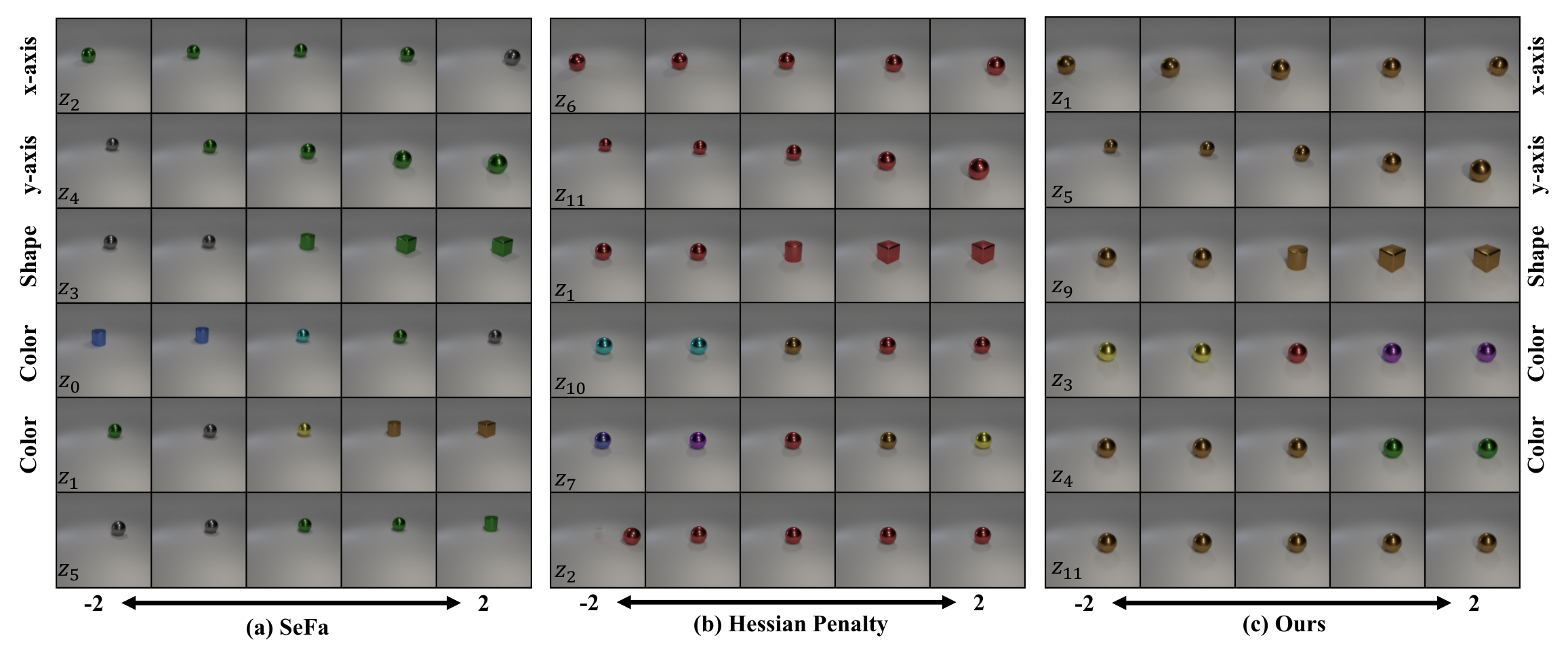}
		\vspace{-1.2em}
		\caption{Comparison of disentanglement quality by our OroJaR, Hessian Penalty \cite{peebles2020hessian} and SeFa \cite{shen2020closed} on CLEVR-Simple. 
			Our method has the ability to shrink the latent space when it is overparameterized.
			So we only show the top six activeness scoring dimensions (See Fig.~\ref{fig:activeness} and Sec.~\ref{sec:quantitative}). \textbf{(a)} SeFa disentangles the positions (top two rows). However it entangles the color with the shape variation (3rd-5th rows). \textbf{(b) } Hessian Penalty learns to control the vertical position, shape and color of the object independently (2nd-5th rows). However, horizontal position is unexceptedly controlled by two dimensions (1st and 6th rows). \textbf{(c)} Our method can successfully disentangle the four factors (two dimensions for color variation, but the colors controlled by them are non-overlapping) in CLEVR-Simple, and achieves better disentanglement performance.}
		\label{fig:visual_simple}
		\vspace{-1.3em}
	\end{figure*}
	
	\noindent\textbf{Hessian Penalty}.~To learn disentangled representation, Hessian Penalty~\cite{peebles2020hessian} encourages the generator to have diagonal Hessian of the output with respect to the input.
	By only considering two latent dimensions $ z_i $ and $ z_j $, the objective of Hessian Penalty can be written as,
	\begin{equation}
		\left \Vert \frac{\partial^2 G}{\partial z_i \partial z_j} \right \Vert^2 = 0.
		\label{eq:hessian}
	\end{equation}
	The left term can be further decomposed into 4 components,
	\vspace{-1em}
	\begin{equation}
		\small
		\begin{split}
			&\left \Vert \frac{\partial^2 G}{\partial z_i \partial z_j} \right \Vert^2 = \left[ \frac{\partial^2 G}{\partial z_i \partial z_j} \right]^T \frac{\partial^2 G}{\partial z_j \partial z_i} \\
			\approx &  \frac{1}{\delta z_i \delta z_j}  \left[ \frac{\partial G(z_i, z_j + \delta z_j)}{\partial z_i} \right]^T \frac{\partial G(z_i+\delta z_i, z_j)}{\partial z_j}  \\
			&  - \frac{1}{\delta z_i \delta z_j} \left[ \frac{\partial G(z_i, z_j + \delta z_j)}{\partial z_i} \right]^T  \frac{\partial G(z_i, z_j)}{\partial z_j}\\
			& - \frac{1}{\delta z_i \delta z_j} \left[ \frac{\partial G(z_i, z_j)}{\partial z_i} \right]^T  \frac{\partial G(z_i+\delta z_i, z_j)}{\partial z_j}\\
			& + \frac{1}{\delta z_i \delta z_j} \left[ \frac{\partial G(z_i, z_j)}{\partial z_i} \right]^T  \frac{\partial G(z_i, z_j)}{\partial z_j}. 
		\end{split}
		\label{eq:hessian4}
	\end{equation}
	where $ \frac{\partial G(z_i, z_j + \delta z_j)}{\partial z_i} $ is the partial gradient of $G$ at ($z_i$, $z_j + \delta z_j$) in the $z_i$ direction, and the other items are similarly defined.
	%
	%
	When the partial gradient is smooth with the small changes in $z_i$ and $z_j$, our OroJaR constrains both the last component and the other three components of Eqn.~(\ref{eq:hessian4}) to approach zero.
	Thus, OroJaR can offer an indirect and stronger regularization of Hessian Penalty.
	Moreover, OroJaR constrains the change caused by latent dimension in a holistic way, making it effective in disentangling latent dimensions corresponding to spatially correlated variations.
	
	\vspace{-1em}
	\section{Experiments}
	\vspace{-0.6em}
	
	\begin{figure*}
		\centering
		\includegraphics[width=0.99\linewidth]{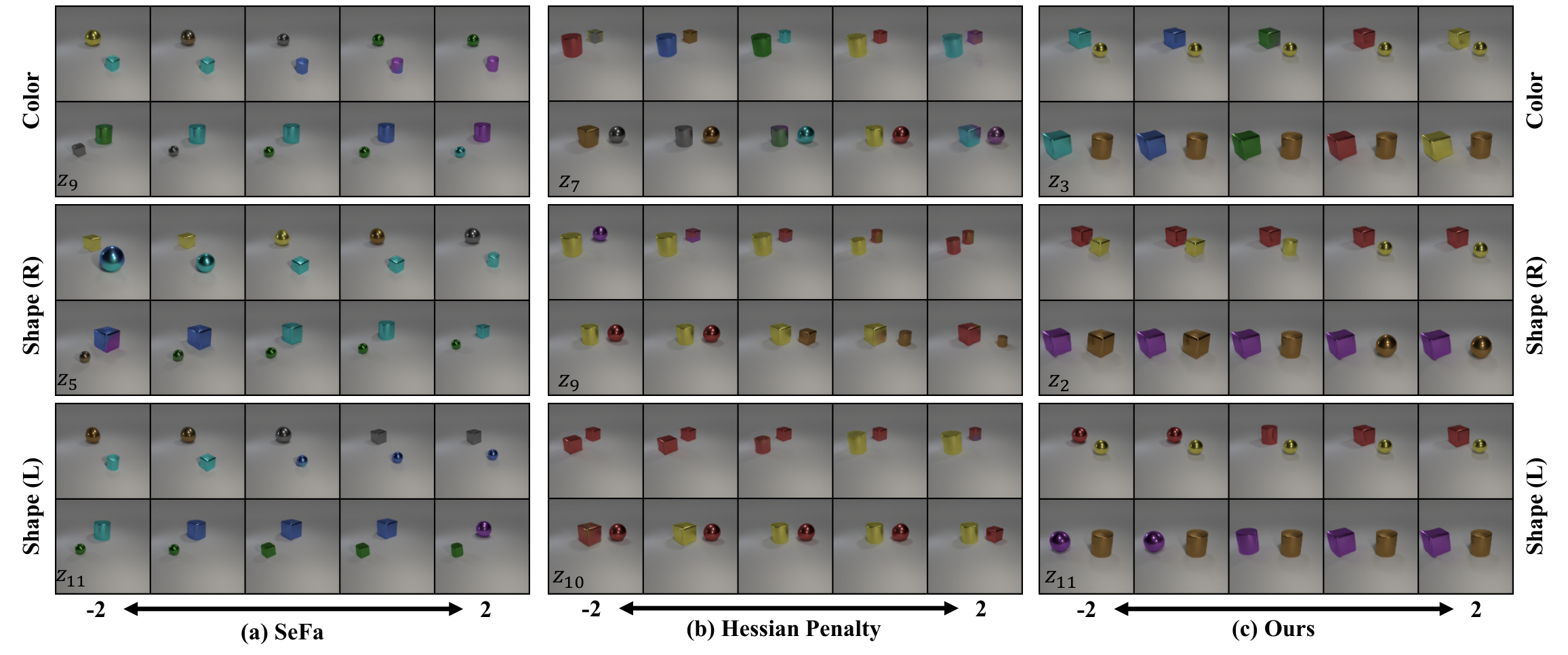}
		\vspace{-1.1em}
		\caption{Comparison of disentanglement quality by our OroJaR, Hessian Penalty \cite{peebles2020hessian} and SeFa \cite{shen2020closed} on CLEVR-Complex. 
			Here we show three representative factors discovered by all methods, \ie, color (Top),  shape of the rightmost object (Middle), and shape of the leftmost object (Bottom).
			\textbf{(a)} SeFa fails to disentangle the shape with color (see Middle and Bottom), and results in entangled representations.
			\textbf{(b)} Hessian Penalty performs poorly in controlling a single object while keeping another object unchanged.
			It learns to control the color of two objects by one dimension (see Top), and the shape or color of another object is also changed when changing the shape of one object (see Middle and Bottom).
			\textbf{(c)} Our OroJaR is effective in disentangling the color of leftmost object and the shape of each object.}
		\label{fig:visual_complex}
		\vspace{-1.6em}
	\end{figure*}
	
	In this section, we begin with an introduction of the datasets and implementation details, and then evaluate our OroJaR qualitatively and quantitatively by comparing it with the state-of-the-art methods. 
	A comprehensive ablation study is given in the \emph{suppl}.
	
	\vspace{-0.3em}
	\subsection{Datasets and Implementation Details}
	\vspace{-0.2em}
	
	\subsubsection{Datasets}
	\vspace{-0.2em}

	\textbf{Edges+Shoes.} Edges+Shoes \cite{yu2014fine} consists of 50,000 edges and 50,000 shoes images. 
	Following~\cite{peebles2020hessian}, we adopt this dataset to evaluate whether our method can discover an independent input component to control image-to-image translation without domain supervision.
	
	\noindent
	\textbf{CLEVR.} CLEVR dataset contians three synthetic datasets based on CLEVR \cite{johnson2017clevr}.
	The first dataset, CLEVR-1FOV, features a red cube with just a single factor of variation (FOV): object location along a single axis. 
	The second, CLEVR-Simple, has four FOVs: object color, shape, and location (both horizontal and vertical).
	The third, CLEVR-Complex, retains all FOVs from CLEVR-Simple and adds a second object and another FOV (\ie, object size), resulting in a total of ten FOVs (five per object).
	Each dataset consists of approximately 10,000 images.
	
	\noindent
	\textbf{Dsprites.} Dsprites \cite{dsprites17} contains totally 737,280 images generated from 5 independent latent factors (shape, size, rotation, horizontal and vertical positions).
	
	\vspace{-1em}
	\subsubsection{Implementation Details}
	\vspace{-0.5em}
	
	For Edges+Shoes and CLEVR datasets, we follow \cite{peebles2020hessian} to train the ProGAN \cite{karras2017progressive} on them and set the dimension of input to 12. 
	The image size is set to 128 $\times$ 128. 
	For the Dsprites dataset, we train a simple GAN (6 convolution layers), and the dimension of input is set to 6.
	The image size is set to 64 $ \times $ 64.
	In all the experiments, the OroJaR is applied right after the projection/convolution outputs for the first $D$ (10 for ProGAN and 4 for simple GAN) layers. We find that our OroJaR empirically achieves the best disentanglement performance when $ D $ corresponds to the last layer before the last upsampling layer.
	
	For BigGAN experiments, we set the $ N = m $ and restrict $ \mathbf{A} $ to be orthonormal by applying Gram-Schmidt and normalization during each forward pass.
	
	\begin{figure}
		\centering
		\includegraphics[width=1\linewidth]{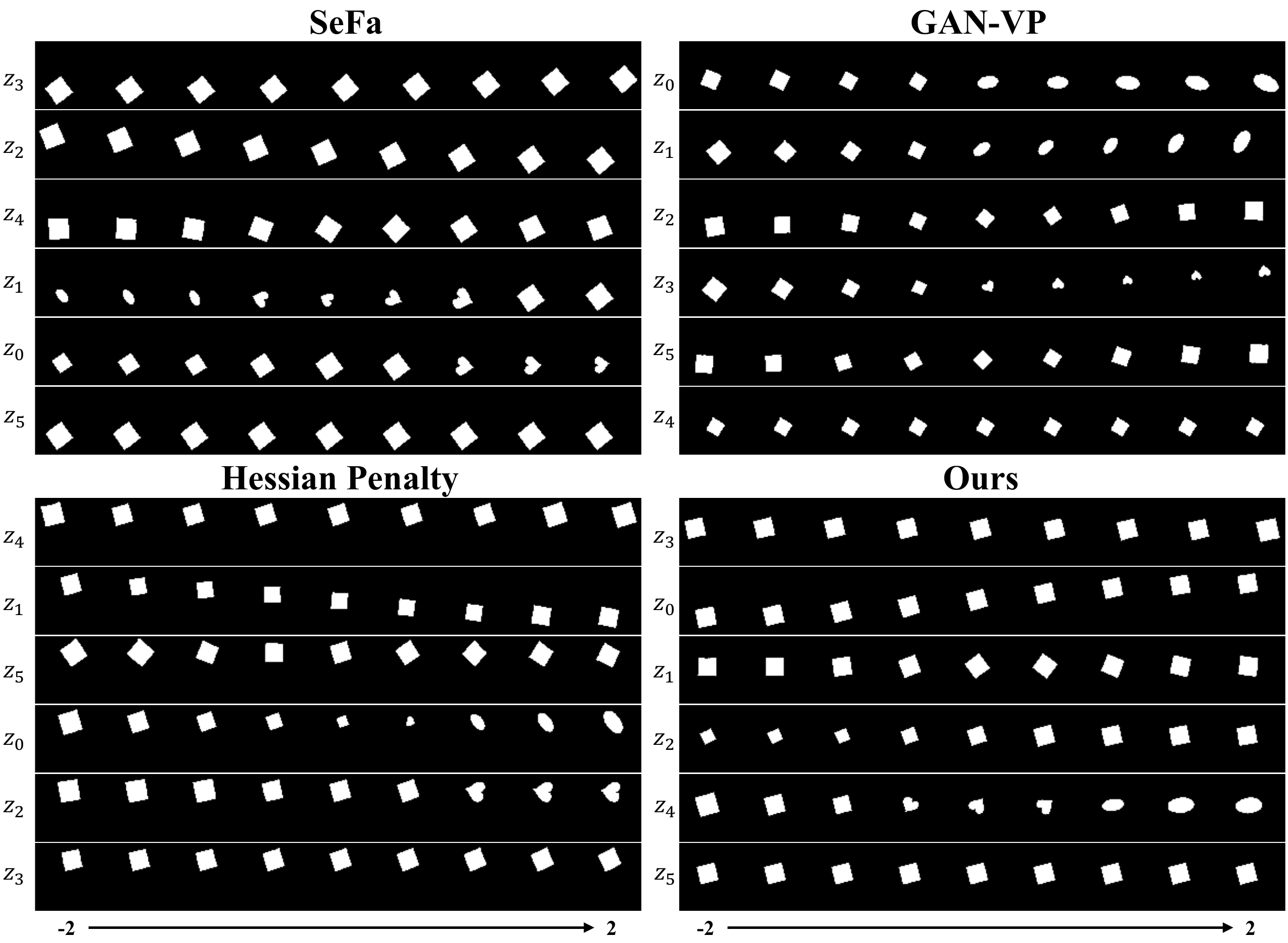}
		\vspace{-2em}
		\caption{Comparison of disentanglement quality by SeFa~\cite{shen2020closed}, GAN-VP~\cite{zhu2020learning}, Hessian Penalty~\cite{peebles2020hessian}, and our OroJaR on the Dsprites dataset. 
			\textbf{Top-Left:} SeFa~\cite{shen2020closed} entangles the rotation with the positions of object (2nd row). It also entangles the size factor with the shape factor (4th and 5th rows).
			\textbf{Top-Right:} For GAN-VP~\cite{zhu2020learning}, the positions are entangled with shape and rotation.
			\textbf{Bottom-Left:} Hessian Penalty~\cite{peebles2020hessian} entangles the rotation with positions, and also entangles the size with shape.
			\textbf{Bottom-Right:} Our method can successfully disentangle these five factors. From top to down, each row controls the horizontal position, vertical position, rotation, size, and shape, respectively. The latent dimension of the last row is correctly deactivated.
		}
		\label{fig:visual_dsprites}
		\vspace{-2em}
	\end{figure}
	\vspace{-0.5em}
	\subsection{Qualitative Evaluation}
	\vspace{-0.5em}
	In this subsection, we qualitatively compare the disentanglement quality of our OroJaR with three state-of-the-art disentanglement methods, \ie, SeFa~\cite{shen2020closed}, Hessian Penalty~\cite{peebles2020hessian}, and GAN-VP \cite{zhu2020learning}.
	
	\noindent\textbf{Edges+Shoes}.  
	Edges+Shoes dataset is a real-world but relatively simple dataset, where no ground-truth factors are provided. For a fair comparison, we choose the attributes corresponding to top two eigenvalues (lower value means ambiguous semantic direction) in SeFa.
	From Fig.~\ref{fig:visual_shoes}, SeFa, Hessian Penalty, and our OroJaR learn the same two major disentangled variations, \ie, the shape and style of shoes. While our method covers more diverse shapes.
	
	\noindent\textbf{CLEVR-Simple}.  Fig.~\ref{fig:visual_simple} shows the comparison on the CLEVR-Simple dataset.
	We note that the number of factors in this dataset is 4, while the dimension of input is 12. 
	When the latent space is overparameterized, our OroJaR can automatically turn off the extra dimensions. 
	Here we only compare the top six activeness scoring dimensions with the competing methods (See Fig.~\ref{fig:activeness} and Sec.~\ref{sec:quantitative}). 
	The remaining dimensions are deactivated based on both our OroJaR and Hessian Penalty~\cite{peebles2020hessian}, and thus are not shown.
	From Fig.~\ref{fig:visual_simple}, SeFa learns to control the horizontal and vertical positions of the object (top two rows), but entangles the color with the shape variations (3rd-5th rows).
	Hessian Penalty successfully disentangles the vertical position, shape, and color of the object (2nd-5th rows), but the horizontal position is unexpectedly controlled by two dimensions (1st and 6th rows). 
	In comparison, our method successfully disentangles the four factors (top five rows) and deactivates the extra dimension (6th row).

	\noindent\textbf{CLEVR-Complex}.  Fig.~\ref{fig:visual_complex} shows the comparison on the CLEVR-Complex dataset. 
	Obviously, SeFa fails to disentangle the shape with color variations.
	Hessian Penalty performs poorly in controlling a single object while keeping another object unchanged.
	When changing the shape of one object, the shape or color of another object is also changed at the same time.
	A possible explanation is that Hessian Penalty constrains each entry of output independently.
	This makes it not sufficient in disentangling the complex latent directions (\eg, shape and color of an object) corresponding to spatially correlated variations.
	On the contrary, our OroJaR effectively disentangles the color of the leftmost object and the shape of each object, and thus learns a better disentangled representation.
	
	\noindent\textbf{Dsprites}.  Fig.~\ref{fig:visual_dsprites} shows the qualitative comparison with SeFa~\cite{shen2020closed}, Hessian Penalty~\cite{peebles2020hessian}, and GAN-VP \cite{zhu2020learning} on the Dsprites dataset.
	GAN-VP~\cite{zhu2020learning} is still limited in learning disentangled representations, where the positions are entangled with the shape and rotation.
	As for Hessian Penalty~\cite{peebles2020hessian} and SeFa~\cite{shen2020closed}, the positions of object are entangled with the rotation.
	They also fail to disentangle the shape with the size variation.
	In contrast, our OroJaR can successfully disentangle these five factors while correctly deactivating the latent dimension of the last row.
	The results indicate that our OroJaR is superior in disentangling spatially correlated variations (\eg, shape, size, rotation, \etc).
	
	\noindent\textbf{BigGAN}. 
	According to Sec.~\ref{sec: app}, our OroJaR can also be used to discover the meaningful latent directions of pre-trained GAN. Here we apply it to class-conditional BigGAN \cite{brock2018large} trained on ImageNet \cite{deng2009imagenet}. Fig.~\ref{fig:biggan} shows our results on Golden Retrievers and Churches, and our method is able to discover several disentangled directions, such as rotate, zoom, and color. Fig.~\ref{fig:biggan1} shows the qualitative comparison with Hessian Penalty \cite{peebles2020hessian} and Voynov~\cite{voynov2020unsupervised}. Voynov~\cite{voynov2020unsupervised} entangles the color of the dog with zoom variation. Hessian Penalty entangles the rotation with zoom variation. In contrast, our OroJaR performs a better zoom quality.
	
	\noindent\textbf{More Results.} More qualitative results (\eg CLEVR-U, CLEVR-1FOV, and BigGAN) are given in the suppl.
	
	\begin{figure}
		\centering
		\includegraphics[width=1\linewidth]{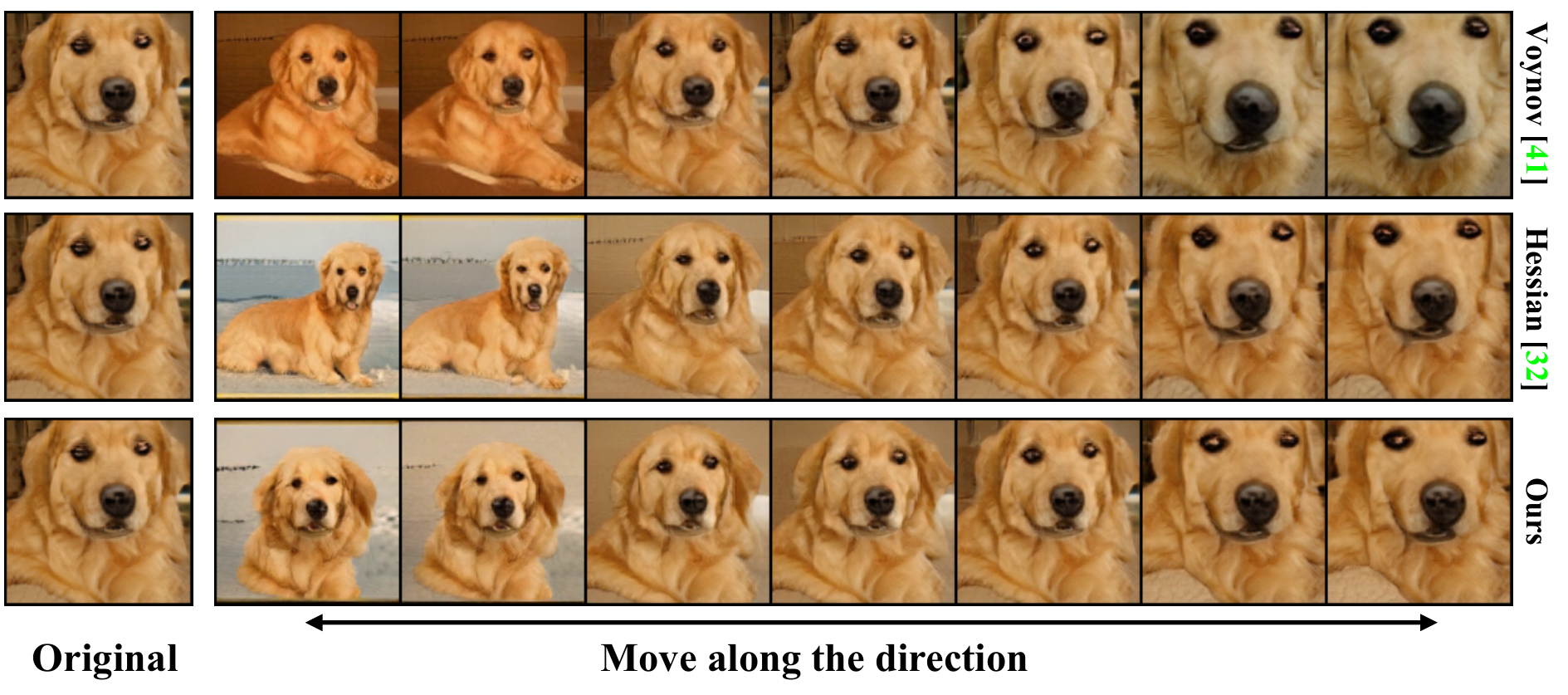}
		\vspace{-2em}
		\caption{
			Comparing the quality of latent space editing by our OroJaR, Hessian Penalty \cite{peebles2020hessian}, and Voynov \cite{voynov2020unsupervised}. The direction is added from $\eta$ = -2.5 to 2.5 for Hessian Penalty and our OroJaR, and from -8 to 8 for Voynov.  Our OroJaR better disentangles zoom from rotation and color.
		}
		\label{fig:biggan1}
		\vspace{-1.4em}
	\end{figure} 
	
	\vspace{-0.5em}
	\subsection{Quantitative Evaluation \label{sec:quantitative}}
	\vspace{-0.5em}
	
	\begin{table*}[t]
		{\small
			\centering
			\caption{Comparison of Perceptual Path Length (PPL), Frchet Inception Distance (FID) and Variation Predictability Metric (VP) for different methods on Edges+Shoes and CLEVR. For FID and PPL, lower is better, and for VP, higher is better.  We report the model with the best FID within the same number of training iterations. PPL, FID, and VP are computed with 100,000, 50,000 and 10,000 samples. The CLEVR-U dataset indicates that we train the model on CLEVR-Simple by setting $ m = 3 $. Due to CLEVR-1FOV only has one factor, we do not report the VP results on it.} %
			\label{tab:ppls}
			\vspace{-0.8em}
			\resizebox{1.0\linewidth}{!}{
				\aboverulesep=0ex
				\belowrulesep=0ex
				\renewcommand{\arraystretch}{1.2}
				\begin{tabular}{lccccccccccccccc}
					\toprule
					\multirow{2}[3]{*}{Method} &
					\multicolumn{3}{c}{Edges$+$Shoes} &
					\multicolumn{3}{c}{CLEVR-Simple} &
					\multicolumn{3}{c}{CLEVR-Complex} &
					\multicolumn{3}{c}{CLEVR-U} &
					\multicolumn{3}{c}{CLEVR-1FOV} \\
					\cmidrule(llr){2-4}
					\cmidrule(llr){5-7}
					\cmidrule(llr){8-10}
					\cmidrule(llr){11-13}
					\cmidrule(llr){14-16}
					& PPL ($ \downarrow $) & FID  ($ \downarrow $) & VP ($ \uparrow $) & PPL & FID & VP & PPL & FID & VP & PPL & FID & VP & PPL & FID & VP\\
					\midrule
					
					InfoGAN\cite{chen2016infogan}  & 2952.2 & \textbf{10.4} & 15.6 & 56.2 & \textbf{2.9} & 28.7 & 83.9 & \textbf{4.2} & 27.9 & 766.7 & 3.6 & 40.1 & 22.1 & 6.2 & - \\
					
					ProGAN \cite{karras2017progressive}  & 3154.1 & 10.8 & 15.5 & 64.5 & 3.8 & 27.2 & 84.4 & 5.5 & 25.5 & 697.7 & \textbf{3.4} & 40.2 & 30.3 & 9.0 & - \\
					
					SeFa\cite{shen2020closed}  & 3154.1 & 10.8 & 24.1 & 64.5 & 3.8 & 58.4 & 84.4 & 5.5 & 30.9 & 697.7 & \textbf{3.4} & 42.0 & 30.3 & 9.0 & - \\
					
					Hessian Penalty \cite{peebles2020hessian} & 554.1 & 17.3 & 28.6 & 39.7 & 6.1 & 71.3 & 74.7 & 7.1 & 42.9 & 61.6 & 26.8 & 79.2 & 20.8 & 2.3 & - \\
					Ours & \textbf{236.7} & 16.1 & \textbf{32.3} & \textbf{6.7} & 4.9 & \textbf{76.9} & \textbf{10.4} & 10.7 & \textbf{48.8} & \textbf{40.9} & 4.6 & \textbf{90.7} & \textbf{2.8} & \textbf{2.1} & - \\
					\bottomrule
					\vspace{0.1pt}
				\end{tabular}
			}
		}
		\vspace{-2.6em}
	\end{table*}
	
	\begin{table}[t]
		\begin{center}
			\caption{Comparison of Variation Predictability Metric (VP) for different methods on Dsprites.}
			\vspace{-0.8em}
			\label{tab:vp}
			\resizebox{1.0\linewidth}{!}{
				\aboverulesep=0ex
				\belowrulesep=0ex
				\renewcommand{\arraystretch}{1.2}
				\begin{tabular}{lccccc}
					\hline\noalign{\smallskip}
					Method & GAN & SeFa & GAN-VP & Hessian Penalty & Ours  \\
					\hline
					VP(\%, $ \uparrow $) & 30.9 (0.84) & 48.6 (0.70) & 39.1 (0.48) & 48.5 (0.56) & \textbf{54.7 (0.27)} \\
					\hline
				\end{tabular}
			}
		\end{center}
		\vspace{-1.9em}
	\end{table}
	
	In this subsection, we quantitatively compare our OroJaR with several state-of-the-art deep generative models. 
	Following~\cite{peebles2020hessian}, we use Perceptual Path Length (PPL) and Frechet Inception Distance (FID) as the quantitative metrics. 
	PPL~\cite{karras2019style} measures the smoothness of the generator by evaluating how much $ G(\mathbf{z}) $ changes under perturbations to $ \mathbf{z} $.
	While FID~\cite{heusel2017gans} exploits the distance between activation distributions for measuring the quality of generated images.
	However, neither PPL nor FID are designed for assessing disentanglement performance. 
	So we also report the Variation Predictability Disentanglement Metric (VP)~\cite{zhu2020learning} in the quantitative evaluation.

	Table~\ref{tab:ppls} lists the quantitative comparison results on the Edges+Shoes and the CLEVR datasets.
	The CLEVR-1FOV dataset has only one factor and all the competing methods have the same VP value. 
	So we do not report the VP results on this dataset. 
	From Table~\ref{tab:ppls}, our OroJaR achieves better VP results on all datasets, indicating that it can learn better disentangled representation. 
	Besides, it also serves as a path length regularization in \cite{karras2020analyzing} and helps learn a smooth latent space, resulting better PPL results.
	For our OroJaR, we empirically find that removing the normalization and activation of the first fully-connected layer is beneficial to the improvements on disentanglement. 
	Albeit InfoGAN~\cite{chen2016infogan} gets lower FID on most datasets, it performs poorly in learning disentangled representation.
	Table~\ref{tab:vp} lists the VP results on the Dsprites dataset, and our OroJaR also achieves the highest VP among the competing methods, indicating that our OroJaR performs favorably against the state-of-the-art methods for unsupervised disentanglement learning.

	\begin{figure}
		\centering
		\includegraphics[width=1\linewidth]{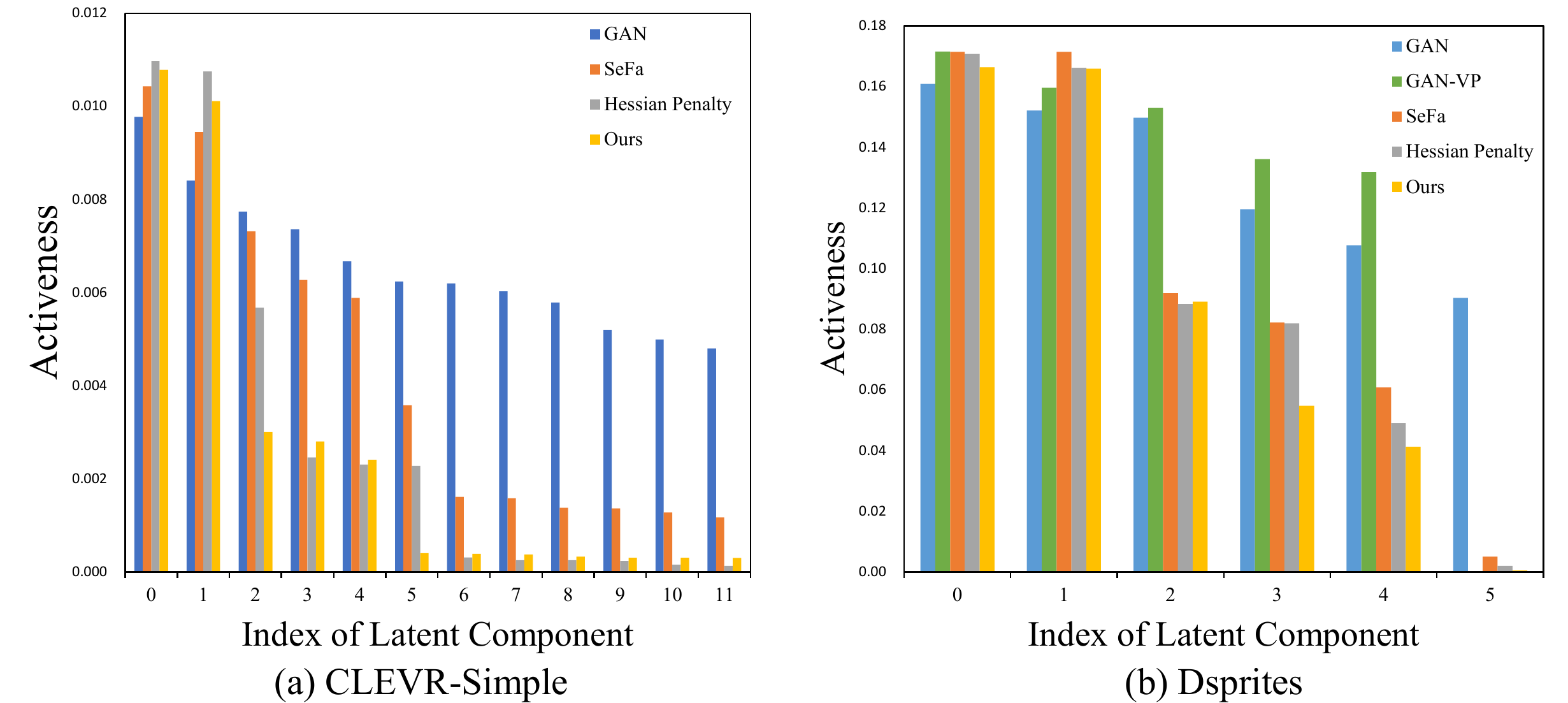}
		\vspace{-1.9em}
		\caption{Comparison of Activeness Scores (how much each dimension controls $G$'s output) on CLEVR-Simple and Dsprites.
			(a) On CLEVR-Simple, both our OroJaR and Hessian Penalty \cite{peebles2020hessian} can deactivate the redundant dimensions (5/6 of 12 are activated). (b) On Dsprites, we have similar observation. SeFa \cite{shen2020closed} and GAN-VP \cite{zhu2020learning} also have the ability to deactivate redundant dimensions.}
		\label{fig:activeness}
		\vspace{-1.9em}
	\end{figure}

	In many practical scenarios, we do not have sufficient prior to setting the number of disentangled factors.
	One feasible solution is to use a larger dimension of input, and the disentanglement algorithm is able to identify and turn off redundant dimensions. 
	Following~\cite{peebles2020hessian}, the activeness of a dimension $ z_i $ is introduced as the mean variance of $ G(\mathbf{z}) $ as we change $ z_i $ while keeping the other dimensions fixed.
	For assessing the ability to find redundant dimensions, Fig.~\ref{fig:activeness} shows the activeness scores on CLEVR-Simple and Dsprites. 
	In comparison to the GAN counterpart, both SeFa~\cite{shen2020closed}, Hessian Penalty~\cite{peebles2020hessian}, and our OroJaR is able to find redundant dimensions with smaller activeness scores.
	However, SeFa~\cite{shen2020closed} and Hessian Penalty~\cite{peebles2020hessian} fails to find all the redundant dimensions, which can also be observed from Fig.~\ref{fig:visual_simple}. 
	As for GAN-VP~\cite{zhu2020learning}, we note that the VP loss encourages the variation caused by each dimension of $ \mathbf{z} $ to be distinguishable.
	Consequently, it can only deactivate at most one dimension, and the dimension of $\mathbf{z}$ should be carefully set to ensure GAN-VP works well. So we do not report the results of GAN-VP on Edges+Shoes and CLEVR, in which the dimension of input is set to 12 and is higher than the number of FOVs.
	
	\vspace{-0.5em}
	\section{Conclusion}
	\vspace{-0.5em}
	
	In this paper, we proposed an Orthogonal Jacobian Regularization (OroJaR) to help the generative model in learning disentangled representations. 
	It encourages disentanglement by constraining the changes of output caused by different latent dimensions (\ie, Jacobian vectors) to be orthogonal. 
	Moreover, our OroJaR can be applied to multiple layers of the generator, and constrains the output in a holistic way, making it effective in disentangling latent dimensions corresponding to spatially correlated variations.
	Experimental results demonstrate that our OroJaR is effective in disentangled and controllable image generation, and performs favorably against the state-of-the-art methods.  In the future, we will extend OroJaR to VAE and other generative models for improving disentanglement learning.
	
	\vspace{-0.5em}
	\section*{Acknowledgement}
	\vspace{-0.5em}
	
	This work was supported in part by National Key R\&D Program of China under Grant No. 2020AAA0104500, and by the National Natural Science Foundation of China (NSFC) under Grant No.s U19A2073 and 62006064.

	\newpage
	{\small
		\bibliographystyle{ieee_fullname}
		\bibliography{egbib}
	}
	
	\newpage
	
	\begin{table*}
		\begin{center}
			\caption{Comparison of Variation Predictability Metric (VP) for different settings and SeFa on Dsprites.}
			\label{tab:ablation}
			\resizebox{0.8\linewidth}{!}{
				\aboverulesep=0ex
				\belowrulesep=0ex
				\renewcommand{\arraystretch}{1.2}
				\begin{tabular}{lcccccccccccc}
					\hline\noalign{\smallskip}
					Method & GAN & L0 & L1 & L2 & L3 & L4 & L0$\sim$1 & L0$\sim$2 & L0$\sim$3 & L0$\sim$4 & SeFa & Ours(L0$\sim$3)  \\
					\hline
					VP(\%, $ \uparrow $) & 30.9 & 47.6 & 48.1 & 50.2 & 36.4 & 35.0 & 48.8 & 53.5 & \textbf{54.7} & 52.3 & 48.6 & \textbf{54.7} \\
					\hline
				\end{tabular}
			}
		\end{center}
	\end{table*}
	
	\section*{A. Proof of Proposition}
	
	We first give a brief proof of the proposition that related to SeFa \cite{shen2020closed} in the main paper.
	
	\newtheorem{prop}{Proposition}
	\begin{prop}
		Let $\mathbf{W} = \mathbf{U} \boldsymbol{\Lambda} \mathbf{V}^T$ be the singular value decomposition (SVD) of the weight parameter $\mathbf{W}$. Let $\mathbf{z}^{\prime} = \mathbf{V}^T \mathbf{z}$, $\mathbf{W}^{\prime} = \mathbf{U} \boldsymbol{\Lambda}$, and define $G_1(\mathbf{z}) = \mathbf{W}\mathbf{z}$, $G^{\prime}_1(\mathbf{z}^{\prime}) = \mathbf{W}^{\prime}\mathbf{z}^{\prime}$. We have, 
		\begin{enumerate}
			\item $G^{\prime}_1(\mathbf{z}^{\prime})$ is equivalent with $G_1(\mathbf{z})$, i.e., $G_1(\mathbf{z}) = G^{\prime}_1(\mathbf{z}^{\prime})$.
			\item Hard orthogonal Jacobian constraint can be attained, i.e.,
			\begin{equation}
				\left[ \frac{\partial G^{\prime}_1}{\partial z^{\prime}_i}\right]^T \frac{\partial G^{\prime}_1}{\partial z^{\prime}_j} = 0.
			\end{equation}
		\end{enumerate}
	\end{prop}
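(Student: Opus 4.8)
The plan is to establish both claims by direct computation, relying only on the defining properties of the SVD, namely $\mathbf{W} = \mathbf{U}\boldsymbol{\Lambda}\mathbf{V}^T$ together with the orthonormality of the columns of $\mathbf{U}$ and $\mathbf{V}$. First I would treat claim~1: substituting the definitions $\mathbf{z}^{\prime} = \mathbf{V}^T\mathbf{z}$ and $\mathbf{W}^{\prime} = \mathbf{U}\boldsymbol{\Lambda}$ gives $G^{\prime}_1(\mathbf{z}^{\prime}) = \mathbf{W}^{\prime}\mathbf{z}^{\prime} = \mathbf{U}\boldsymbol{\Lambda}\mathbf{V}^T\mathbf{z} = \mathbf{W}\mathbf{z} = G_1(\mathbf{z})$, which is nothing more than associativity of matrix multiplication; no orthogonality is needed for this direction. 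I would additionally remark that, because $\mathbf{V}$ is orthogonal, the map $\mathbf{z}\mapsto\mathbf{z}^{\prime}=\mathbf{V}^T\mathbf{z}$ is an invertible change of basis with inverse $\mathbf{z}=\mathbf{V}\mathbf{z}^{\prime}$, so that each coordinate of $\mathbf{z}^{\prime}$ indeed corresponds to a column of $\mathbf{V}$, i.e. to one of the semantic directions discovered by SeFa.

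For claim~2, the key observation is that $G^{\prime}_1$ is \emph{linear} in $\mathbf{z}^{\prime}$, so its Jacobian vector with respect to the $i$-th coordinate $z^{\prime}_i$ is exactly the $i$-th column of $\mathbf{W}^{\prime}=\mathbf{U}\boldsymbol{\Lambda}$, that is $\frac{\partial G^{\prime}_1}{\partial z^{\prime}_i} = \lambda_i\,\mathbf{u}_i$, where $\mathbf{u}_i$ denotes the $i$-th column of $\mathbf{U}$ and $\lambda_i$ the corresponding singular value. Therefore, for $i\neq j$,
\begin{equation}
\left[\frac{\partial G^{\prime}_1}{\partial z^{\prime}_i}\right]^T \frac{\partial G^{\prime}_1}{\partial z^{\prime}_j} = \lambda_i \lambda_j\, \mathbf{u}_i^T\mathbf{u}_j = 0,
\end{equation}
since the columns of $\mathbf{U}$ are orthonormal. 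Combined with claim~1, this shows that the reparametrized first layer $G^{\prime}_1$ attains the hard orthogonal Jacobian constraint of Eqn.~(\ref{eq:l1}), so SeFa is recovered as the special case of OroJaR in which $\mathcal{L}_J$ is minimized to zero on the first layer alone while all deeper layers are held fixed.

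I do not expect a genuine obstacle here; the argument is essentially routine. The only points requiring a little care are bookkeeping ones: using the thin (compact) SVD so that $\mathbf{U}^T\mathbf{U}=\mathbf{I}$ and $\mathbf{V}$ is a genuine orthogonal matrix, keeping the index restriction $i\neq j$ as in the regularizer, and noting that the same computation goes through unchanged if $G^{\prime}_1$ is composed with any fixed downstream sub-network, since the Jacobian chain rule then multiplies each $\lambda_i\mathbf{u}_i$ by the \emph{same} downstream Jacobian and the inner product in Eqn.~(\ref{eq:l1}) is taken layerwise at $G^{\prime}_1$. This is what justifies calling SeFa a special case of OroJaR restricted to the first layer.
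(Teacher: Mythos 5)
Your proof is correct and follows essentially the same route as the paper: claim 1 by direct substitution using $\mathbf{W}=\mathbf{U}\boldsymbol{\Lambda}\mathbf{V}^T$, and claim 2 by observing that the Jacobian of the linear map $G'_1$ is $\mathbf{W}'=\mathbf{U}\boldsymbol{\Lambda}$ and invoking $\mathbf{U}^T\mathbf{U}=\mathbf{I}$ (the paper phrases this as the Gram matrix $[\mathbf{W}']^T\mathbf{W}'=\boldsymbol{\Lambda}^2$ being diagonal, whereas you write the off-diagonal inner products $\lambda_i\lambda_j\mathbf{u}_i^T\mathbf{u}_j=0$ columnwise --- the same computation). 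The additional remarks on the thin SVD and the change of basis are fine but not needed.
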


	\begin{proof} \quad
		\newline 1. 
		\begin{equation}
			\begin{split}
				G_1(\mathbf{z}) &= \mathbf{W}\mathbf{z} = \mathbf{U} \boldsymbol{\Lambda} \mathbf{V}^T\mathbf{z} = \mathbf{U} \boldsymbol{\Lambda} \mathbf{z}^{\prime} \\
				&= \mathbf{W}^{\prime}\mathbf{z}^{\prime} = G^{\prime}_1(\mathbf{z}^{\prime}) .
			\end{split}
		\end{equation}
		2. 
		\begin{equation}
			\label{eqn:jacobian}
			\begin{split}
				\left[ \frac{\partial G^{\prime}_1}{\partial \mathbf{z^{\prime}}}\right]^T \frac{\partial G^{\prime}_1}{\partial \mathbf{z^{\prime}}} & = \left[ \mathbf{W}^{\prime} \right]^T \mathbf{W}^{\prime} =  \boldsymbol{\Lambda}^T \mathbf{U}^T  \mathbf{U} \boldsymbol{\Lambda} \\
				&= \boldsymbol{\Lambda}^2,
			\end{split}
		\end{equation}
		where $\boldsymbol{\Lambda}$ is diagonal. $\left[ \frac{\partial G^{\prime}_1}{\partial z^{\prime}_i}\right]^T \frac{\partial G^{\prime}_1}{\partial z^{\prime}_j}$ is the off-diagonal entry of Eqn.~\ref{eqn:jacobian} when $ i \neq j $, and thus to be zero.
	\end{proof}
	
	\section*{B. Additional Qualitative Results}
	
	\subsection*{B.1 CLEVR-1FOV Dataset}

	\begin{figure*}
		\centering
		\includegraphics[width=0.99\linewidth]{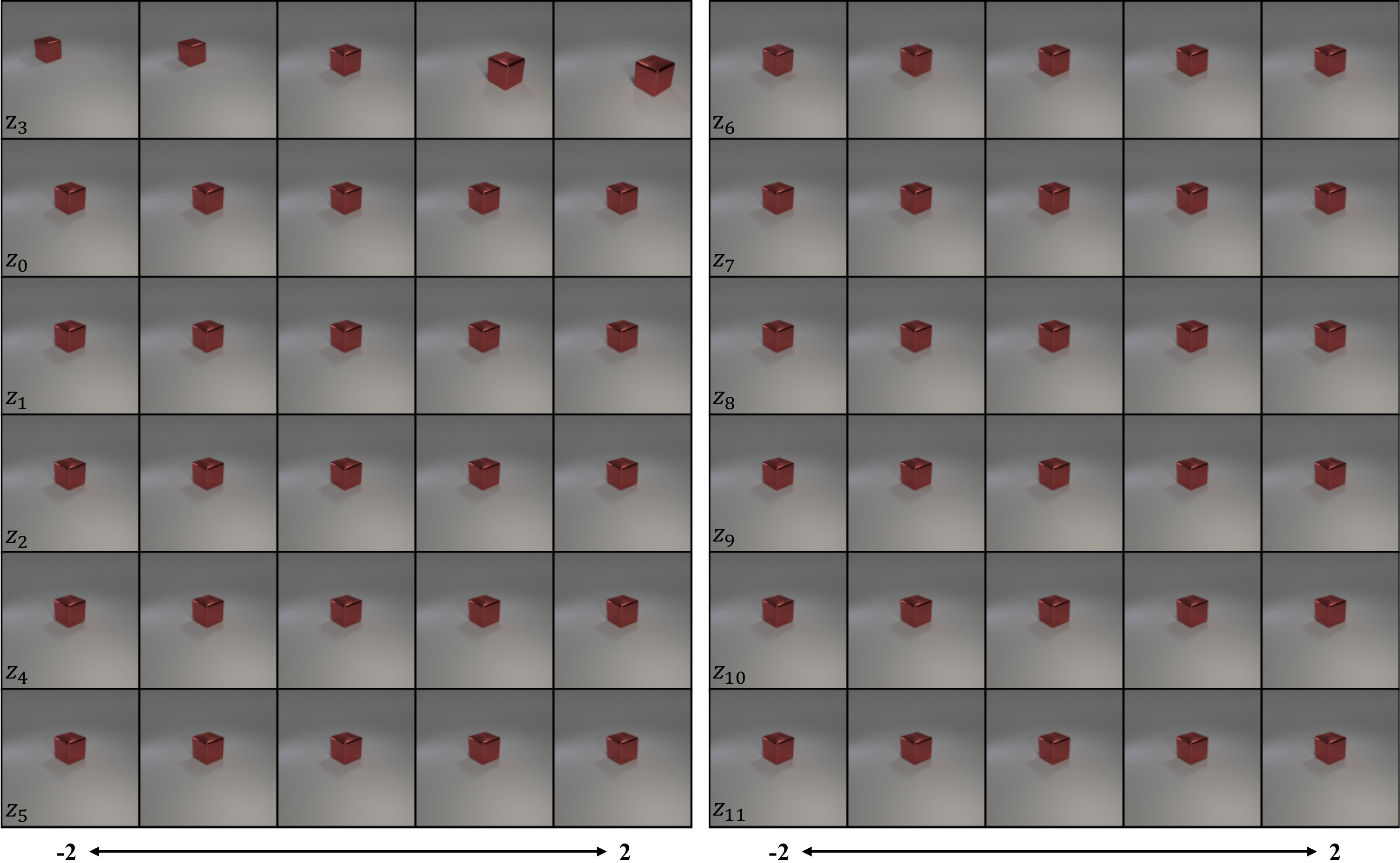}
		\caption{Qualitative results on the CLEVR-1FOV by our OroJaR. We randomly sample a 12-dimensional input vector $\mathbf{z}$ from a normal distribution, and each row corresponds to one of the dimensions. Moving across a row, we vary the value of dimension $ z_i $ from $-2$ to $+2$ while keeping the other 11 dimensions unchanged. It can be seen that, the redundant dimensions are successfully deactivated. While the only activated dimension (the top left row) controls the unique factor of variation in the dataset.}
		\label{fig:1fov}
	\end{figure*}
	
	Fig.~\ref{fig:1fov} shows the qualitative results on the CLEVR-1FOV by our OroJaR. CLEVR-1FOV has only one factor of variation: a red cube's location along a single axis. From Fig.~\ref{fig:1fov}, our OroJaR can successfully deactivate the redundant dimensions while controlling the position of the object with the only activated dimension (the top left row).
	
	\subsection*{B.2 CLEVR-U Dataset}

	\begin{figure*}
		\centering
		\includegraphics[width=0.99\linewidth]{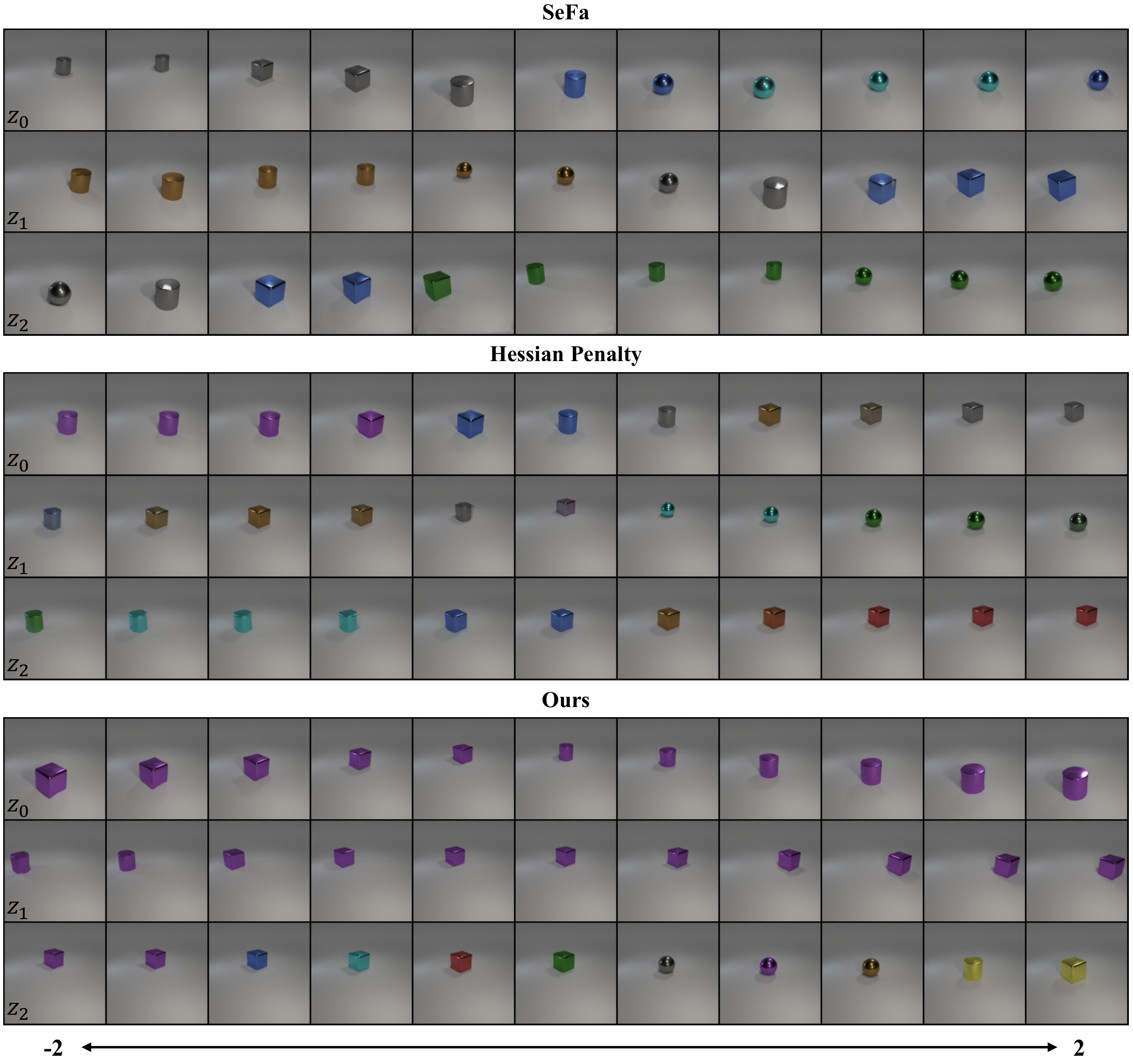}
		\caption{Comparison of disentanglement quality by SeFa~\cite{shen2020closed}, Hessian Penalty~\cite{peebles2020hessian}, and our OroJaR on the CLEVR-U dataset. CLEVR-U indicates that we trained the model on CLEVR-Simple (4 factors) by setting the dimension of input to 3. For each method, we randomly sample a 3-dimensional Gaussian vector, and each row corresponds to one of the dimensions.  \textbf{Top}: In this underparameterized setting, SeFa learns an entangled representation, and each dimension controls all four variations at the same time. \textbf{Middle}: Hessian Penalty also entangles the shape with position variation. When moving the object along a direction, the shape of the object is also changed at the same time (the last two rows). \textbf{Bottom}: Our method learns a better disentangled result. From top to down, each dimension controls the variations of vertical position, horizontal position, and shape (entangles with color), respectively.}
		\label{fig:clevru}
	\end{figure*}
	
	Fig.~\ref{fig:clevru} shows the qualitative comparison with SeFa \cite{shen2020closed} and Hessian Penalty \cite{peebles2020hessian} on the CLEVR-U dataset. CLEVR-U indicates that we train the model on CLEVR-Simple (4 factors of variation, \ie, horizontal and vertical positions, shape, and color) by setting the dimension of input to 3, which is an underparameterized setting. Obviously, SeFa fails to disentangle the four factors, which is shown that each dimension controls all four variations at the same time. Hessian Penalty also entangles the position with shape variation (\eg, 2nd and 3rd rows). On the contrary, our OroJaR still learned to control the variations of horizontal position, vertical position, and shape (entangles with color) independently. 
	The results indicate that our OroJaR is superior in disentangling spatially correlated variations (\eg, shape and position).

	\subsection*{B.3 BigGAN}
	
	Fig.~\ref{fig:biggan_full} shows a more comprehensive qualitative comparison with Hessian Penalty \cite{peebles2020hessian} and Voynov~\cite{voynov2020unsupervised} on BigGAN. One can see that, three methods discover similar latent directions (\ie, zoom, rotate, and smoosh nose). However, Voynov~\cite{voynov2020unsupervised} gives a degraded results when rotating the dog to the left side. It is worth noting that, Hessian Penalty finds directions that are very similar to ours. For rotation and smoosh nose directions, the cosine similarities between our method and Hessian Penalty reach 0.99. This may be caused by the limitation of pre-trained GAN. Nonetheless, compared with Hessian Penalty, our method performs a better zoom editing quality.
	
	\begin{figure*}
		\centering
		\includegraphics[width=0.99\linewidth]{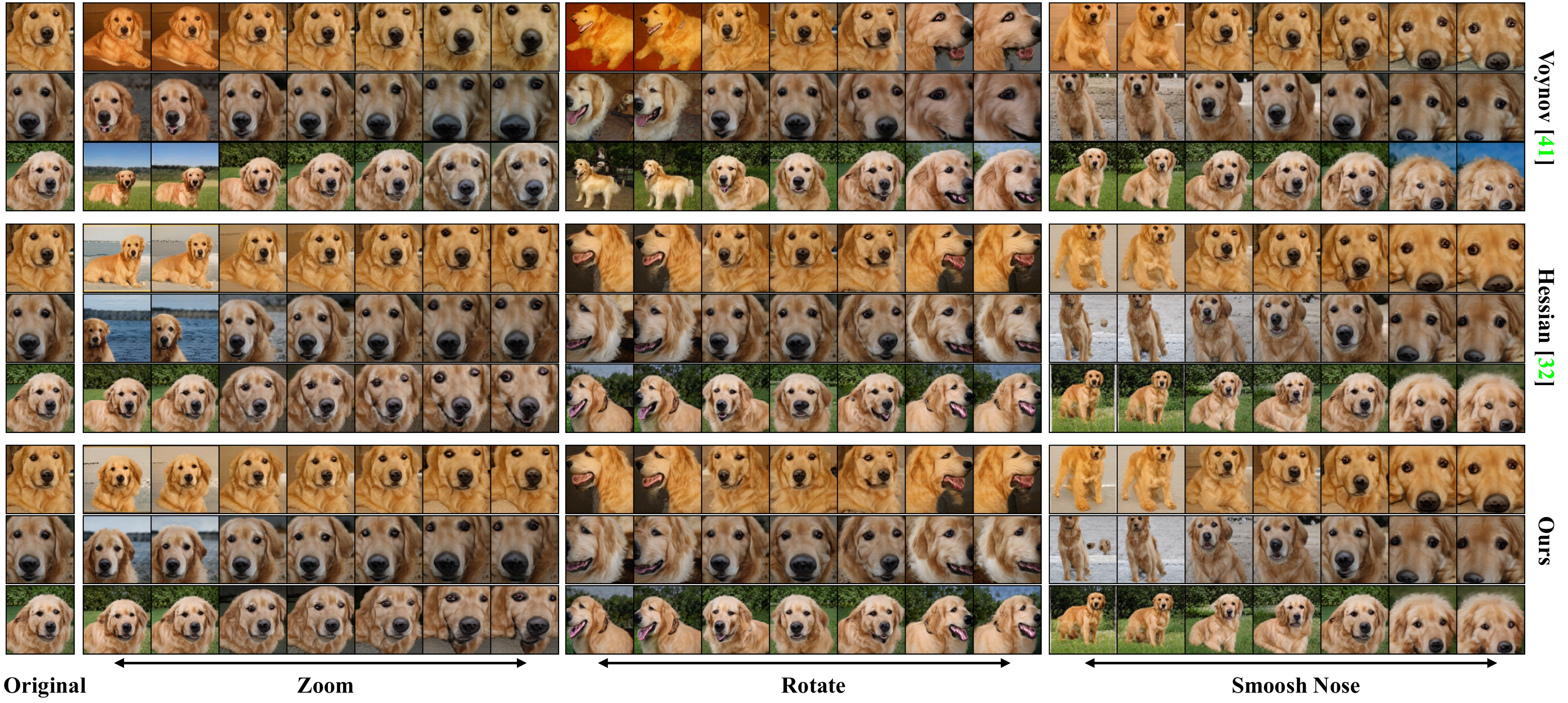}
		\vspace{-0.5em}
		\caption{
			Comparing the quality of latent space editing by our OroJaR, Hessian Penalty \cite{peebles2020hessian} and Voynov \cite{voynov2020unsupervised}. Voynov \cite{voynov2020unsupervised} learns entangled rotation factor, and gets a degraded results when rotating the dog to the left side. Hessian Penalty learns a similar rotation and smoosh nose factors with ours, but our method achieves a better zoom editing quality.
		}
		\label{fig:biggan_full}
	\end{figure*}

	\section*{C. Ablation Study}
	
	\begin{figure*}
		\centering
		\includegraphics[width=0.99\linewidth]{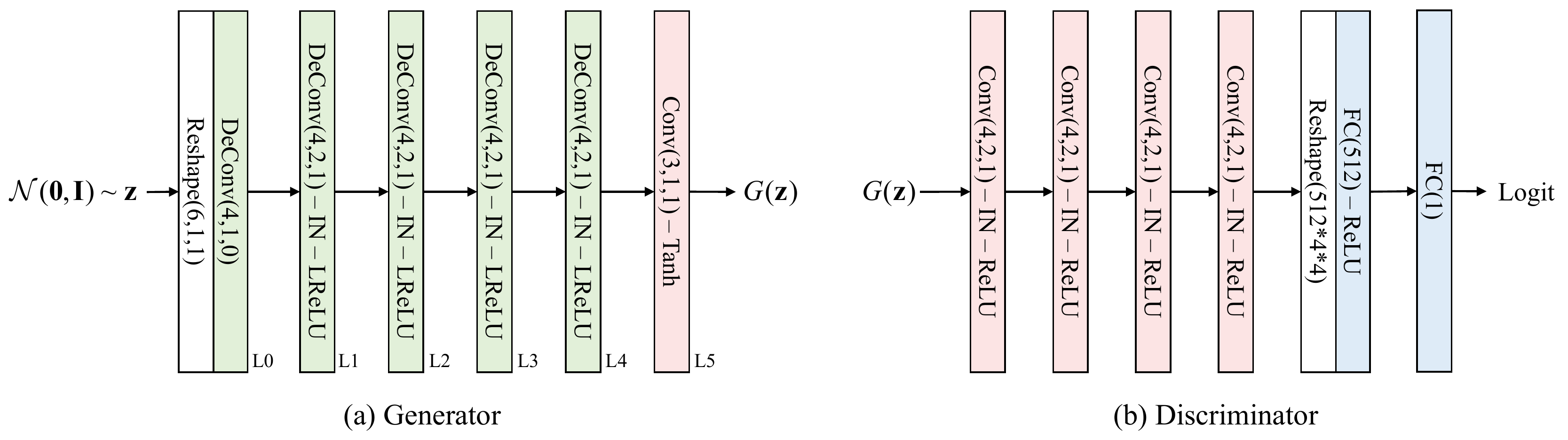}
		\caption{
			Network architectures of simple GAN used on Dsprites experiments. Conv(\textit{k}, \textit{s}, \textit{p}) and DeConv(\textit{k}, \textit{s}, \textit{p}) denote convolutional layer and transposed convolutional layer where \textit{k} is kernel size, \textit{s} is stride and \textit{p} is padding size. FC(\textit{d}) denotes fully connected layer with \textit{d} as output dimension. LReLU denotes the Leaky ReLU nonlinearity.
		}
		\label{fig:simplegan}
	\end{figure*}

	To demonstrate the effectiveness of our OroJaR, we train a simple GAN (6 layers, the network architectures are shown in Fig.~\ref{fig:simplegan}.) on the Dsprites dataset under three different settings:
	\begin{itemize}
		\setlength{\itemsep}{0pt}
		\setlength{\parsep}{0pt}
		\setlength{\parskip}{0pt}
		\item Firstly, we train the GAN without applying the OroJaR (GAN). 
		\item Secondly, we train the GAN by applying OroJaR to every single intermediate layer (L0 to L4). 
		\item Thirdly, we train the GAN by applying the OroJaR to the first multiple layers (L0$ \sim $2 to L0$ \sim $4). 
	\end{itemize}
	Fig.~\ref{fig:ablation} and Table~\ref{tab:ablation} show the qualitative and quantitative comparison among these settings.
	The results on a single intermediate layer (L0 to L4) show that the earlier layers are more effective in deactivating the redundant dimensions, resulting in better disentanglement. 
	By applying OroJaR to the deeper layer, the benefit of OroJaR to the disentanglement first increases and then sharply decreases. 
	Nonetheless, applying OroJaR to a single intermediate layer is not sufficient to learn a well-disentangled model, and applying OroJaR to the first multiple layers helps learn a better disentangled model.  
	Our OroJaR empirically achieves the best disentanglement performance when $ D $ corresponding to the last layer before the last upsampling layer.

	\begin{figure*}
		\centering
		\includegraphics[width=0.99\linewidth]{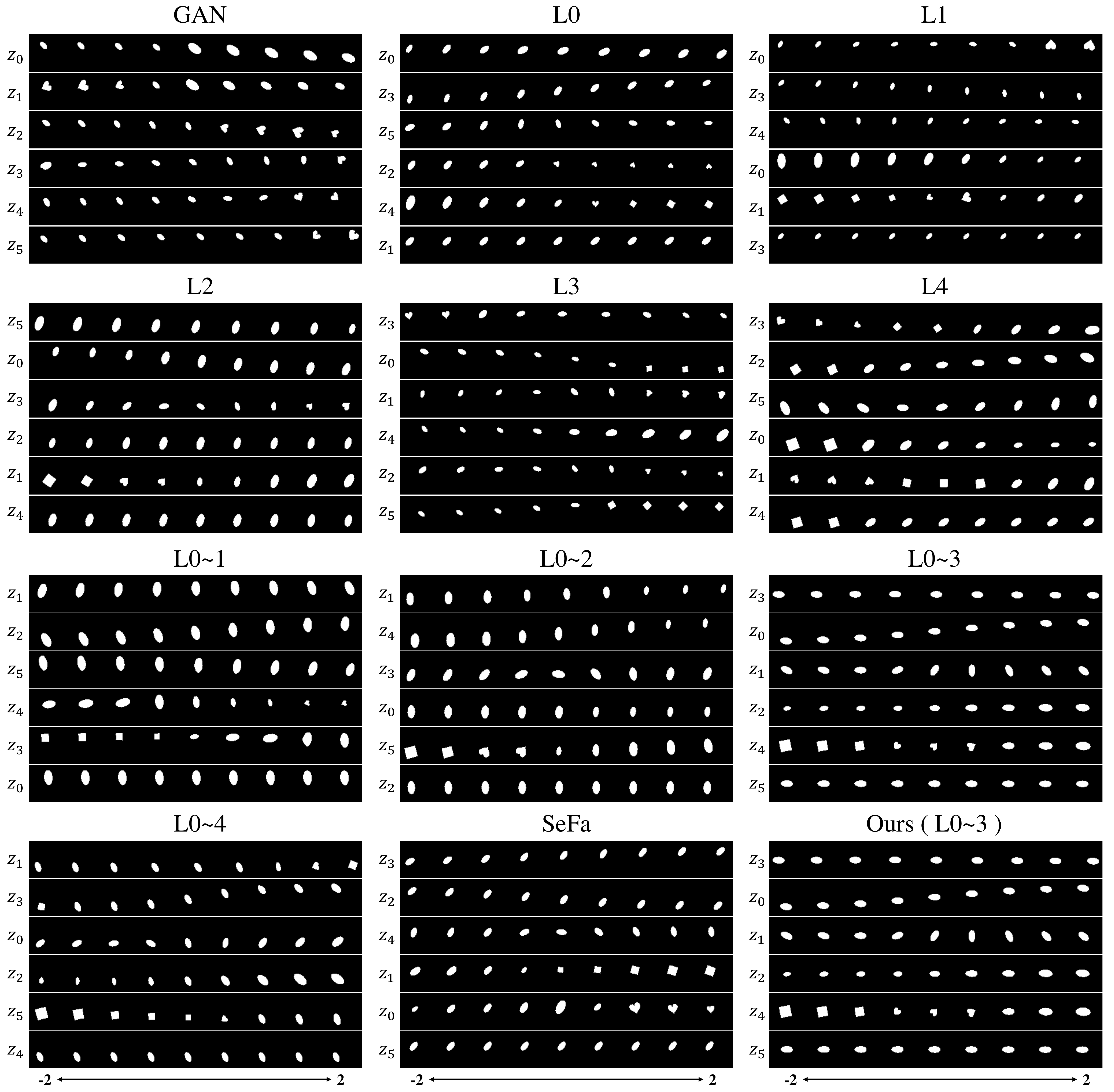}
		\vspace{-0.5em}
		\caption{
			Effectiveness of our OroJaR in disentanglement learning. L$ x $ means OroJaR is applied to the $ x $-th layer, and L$0 \sim x $ means OroJaR is applied to first $ x $ layers. We found that earlier layers are more effective in deactivating the redundant dimensions, resulting in better disentanglement. Applying OroJaR to a single intermediate layer is not sufficient to learn a well-disentangled model, and applying OroJaR to first multiple layers helps learn a better disentangled model.
		}
		\label{fig:ablation}
	\end{figure*}

\end{document}